\DeclareMathOperator*{\argmin}{arg\,min}
\DeclareMathOperator*{\argmax}{arg\,max}
\long\def\comment#1{}
\newfont{\bbb}{msbm10 scaled 700}
\newcommand{\av}{{\bf a}}
\newcommand{\fv}{{\bf f}}
\newcommand{\gv}{{\bf g}}
\newcommand{\uv}{{\bf u}}
\newcommand{\xv}{{\bf x}}
\newcommand{\zerov}{{\bf 0}}
\newcommand{\onev}{{\bf 1}}
\newcommand{\Am}{{\bf A}}
\newcommand{\Dm}{{\bf D}}
\newcommand{\Km}{{\bf K}}
\newcommand{\Lm}{{\bf L}}
\newcommand{\Sm}{{\bf S}}
\newcommand{\Um}{{\bf U}}
\newcommand{\Wm}{{\bf W}}
\newcommand{\Ec}{{\cal E}}
\newcommand{\Nc}{{\cal N}}
\newcommand{\Rc}{{\cal R}}
\newcommand{\Sc}{{\cal S}}
\newcommand{\Vc}{{\cal V}}
\newcommand{\Lcb}{{\bm {\mathcal L}}}
\newcommand{\muv}{\hbox{\boldmath$\mu$}}
\newcommand{\Lambdam}{\hbox{\boldmath$\Lambda$}}
\newcommand{\Sigmam}{\hbox{\boldmath$\Sigma$}}
\newcommand{\diag}{{\hbox{diag}}}
\newcommand{\trace}{{\hbox{tr}}}
\newcommand{\Scc}{{\Sc^c}}
\newtheorem{theorem}{Theorem}
\newtheorem{proposition}{Proposition}
\newtheorem{lemma}{Lemma}
\title{A Probabilistic Interpretation of Sampling Theory of Graph Signals}
\name{Akshay Gadde and Antonio Ortega
\thanks{This work was supported in part by NSF under grant CCF-1410009.}
}
\address{Department of Electrical Engineering\\
	University of Southern California, Los Angeles\\
	Email: agadde@usc.edu, ortega@sipi.usc.edu}
\begin{document}
\ninept

\maketitle
\begin{abstract}
We give a probabilistic interpretation of sampling theory of graph signals. To do this, we first define a generative model for the data using a pairwise Gaussian random field (GRF) which depends on the graph. We show that, under certain conditions, reconstructing a graph signal from a subset of its samples by least squares is equivalent to performing MAP inference on an approximation of this GRF which has a low rank covariance matrix. We then show that a sampling set of given size with the largest associated cut-off frequency, which is optimal from a sampling theoretic point of view, minimizes the worst case predictive covariance of the MAP estimate on the GRF. 
This interpretation also gives an intuitive explanation for the superior performance of  the sampling theoretic approach to active semi-supervised classification. 
\end{abstract}
\begin{keywords}
Graph Signal Processing, Sampling theorem, Gaussian Markov random field, Semi-supervised learning, Active learning
\end{keywords}
%
\vspace{-0.75\baselineskip}
\section{Introduction}
Graph signal processing aims to extend the tools for analysis, approximation, denoising and interpolation of traditional signals to signals defined on graphs. The advantage of this framework is that it allows us to process the given data while taking into consideration the underlying connectivity between the data points. The graph can be inherent to the data as is the case in application areas such as social networks and sensor networks or it can be constructed using the data to capture the underlying geometry. Examples of the latter are found in image processing and machine learning (see \cite{Shuman-SPM-13,Sandryhaila-SPM-14}).

In this paper, we focus on the sampling theory of graph signals. The classical Nyquist-Shannon sampling theorem says that a signal with bandwidth $f$ is uniquely determined by its (uniformly spaced) samples if the sampling rate is higher than $2f$. Intuitively, it tells us how ``smooth" the signal has to be, for perfect recovery, given the sampling density, and vice versa. Moreover, the signal can be perfectly reconstructed from the samples by a simple low pass filter. Sampling theory of graph signals similarly deals with the problem of reconstructing an unknown graph signal from its samples on a subset of nodes. Frequency domain representation of graph signals is given by the eigenvectors and eigenvalues of the Laplacian matrix associated with the graph. In order to pose a sampling theorem analogous to the Nyquist-Shannon sampling theorem, we need to find the maximum bandwidth (in the graph spectral domain) that a graph signal can have so that it is uniquely determined by its samples on the given subset of nodes. Conversely, given the bandwidth, we need to find the smallest subset of nodes, so that recovery of any signal with that bandwidth, from its samples on that subset, is unique and stable. Given that the signal is smooth enough to be uniquely represented by its samples on a subset of nodes, we need to give an efficient and stable algorithm to reconstruct the unknown samples. These questions have been answered to some extent in \cite{Pesenson-AMS-08,Narang-ICASSP-13,Narang-GlobalSIP-13,Anis-ICASSP-14}. We discuss some of these results in Section~\ref{sec:samp_th}.

This sampling theoretic perspective has been shown to be very useful for graph based active semi-supervised learning~\cite{Gadde-KDD-14}. 
In this context, label prediction is considered as a graph signal reconstruction problem. The characterization of a subset of nodes given by the sampling theory, namely the associated cutoff frequency
is used as a criterion function to choose the optimal set nodes to be labelled for active learning. 

Sampling theoretic approaches for active and semi-supervised learning \cite{Gadde-KDD-14} are purely deterministic. However, their probabilistic interpretation is desired for the 
following reasons: 1.~It allows us to understand them as model based methods and thus, makes it easier to include them as components of a larger probabilistic model. 
2.~It can also suggest a principled way to refine the model parameters (which are given by the underlying graph) as more data is observed (see~\cite{Kapoor-NIPS-05} for an example).
3.~The interpretation presented in this paper assumes a Gaussian random field model for the data. 
This may lead to generalizations of the sampling theory to data with non-Gaussian distributions which might be more realistic for a classification problem. 
4.~This interpretation also makes the relationship between the sampling theoretic approach and previously proposed semi-supervised \cite{Zhu-ICML-03} and active learning \cite{Ji-AIST-12, Ma-NIPS-13} methods more apparent as discussed in Section~\ref{sec:related_work}. 

The main contributions of this paper are the following. We define a generative model for graph signals using a pairwise Gaussian random field (GRF) with a covariance matrix that depends on the graph. We show that, when conditions of the graph signal sampling theorem are satisfied, bandlimited reconstruction of a graph signal from a subset of its samples is equivalent to performing MAP inference on a low rank approximation of the above GRF. 
This learning model performs very well in classification problems, as demonstrated in the experiments, since the true data covariance matrix is expected to be close to low rank. We then show that a sampling set of given size with the largest associated cut-off frequency, which is optimal from a sampling theoretic point of view, minimizes the worst case predictive covariance of the MAP estimate on the GRF.


\vspace{-0.5\baselineskip}
\section{Sampling Theory of Graph Signals} 
\subsection{Preliminaries and Notation}
We consider a connected, undirected and weighted graph $G = (\Vc,\Ec)$. The nodes $\Vc$ in the graph are indexed by $\{1,2,\ldots,N\}$. $\Scc$ denotes the complement of $\Sc$ in $\Vc$, i.e., $\Scc = \Vc \setminus \Sc$. The edge set $\Ec$ is given by $\{(i,j,w_{ij})\}$, where $i,j \in \Vc$ and $w_{ij} \in \mathbb{R}^+$. $(i,j,w_{ij})$ denotes an edge with weight $w_{ij}$ connecting nodes $i$ and $j$. The connectivity information given by $\Ec$ is encoded by the adjacency matrix $\Wm$ of size $N \times N$ with $\Wm(i,j) = w_{ij}$. The degree matrix $\Dm$ is a diagonal matrix $\diag\{d_1, \ldots d_N\}$, where $d_i = \sum_j w_{ij}$ is the degree of node $i$. The Laplacian matrix is defined as $\Lm = \Dm - \Wm$. The symmetric normalized form of the Laplacian is given by $\Lcb = \Dm^{-1/2} \Lm \Dm^{-1/2}$.
A graph signal $f: \Vc \rightarrow \mathbb{R}$ is a mapping which takes a real value on each node of the graph. It can be represented as $\fv = (\fv_1, \ldots, \fv_N)^\top \in \mathbb{R}^N$. 
For $\xv \in \mathbb{R}^N$, $\xv_\Sc$ denotes a sub-vector of $\xv$ consisting of its components indexed by $\Sc$. Similarly, for $\Am \in \mathbb{R}^{N\times N}$, $\Am_{\Sc_1\Sc_2}$ is the sub-matrix of $\Am$ with rows indexed by $\Sc_1$ and columns indexed by $\Sc_2$. For simplicity, we denote $\Am_{\Sc\Sc}$ by $\Am_\Sc$. We use $\lambda_\text{max}[.]$ and $\lambda_\text{min}[.]$ to denote the largest and the smallest eigenvalue of a matrix, respectively. $\trace(.)$ denotes the trace of a matrix. $\Am^+$ is used to denote the pseudo-inverse of $\Am$. $\onev$ and $\zerov$ denote vectors or matrices of ones and zeros, respectively. 

It can be shown that $\Lm$ and $\Lcb$ are positive semi-definite. Hence, $\Lm$ has real eigenvalues $0 = \lambda_1 < \lambda_2 \leq \ldots \leq \lambda_N$ and a corresponding orthogonal set of eigenvectors $\{\uv^1, \uv^2, \ldots, \uv^N\}$. It can be diagonalized as $\Lm = \Um \Lambdam \Um^\top$,
where $\Um = (\uv^1, \ldots, \uv^N)$ and $\Lambdam = \diag\{\lambda_1, \ldots, \lambda_N\}$.
Variation in the eigenvectors of $\Lm$ over the graph (as captured by $\uv^\top \Lm \uv = \sum_{i,j} w_{ij}(\uv_i - \uv_j)^2$) increases as the corresponding eigenvalues increase. Thus, these eigenvectors allow us a to define a graph dependent notion of frequency for the graph signals. The so-called Graph Fourier Transform (GFT)\footnote{The GFT is usually defined using the normalized Laplacian $\Lcb$. We define it using $\Lm$ for the sake of notational simplicity. However, most of the discussion in the paper can be easily generalized to $\Lcb$.} is defined as $\tilde{\fv}_i = \Braket{\fv, \uv^i}$ (or in an equivalent matrix form $\tilde{\fv} = \Um^\top \fv$), where $\tilde{\fv}_i$ is the GFT coefficient corresponding to frequency $\lambda_i$. An $\omega$-bandlimited signal has its GFT supported on $[0,\omega]$, i.e., $\tilde{\fv_i} = 0$ for $\lambda_i > \omega$. Conversely, such a signal is said to have a bandwidth equal to $\omega$. If $\{\lambda_1, \ldots, \lambda_r\}$ are the eigenvalues less than $\omega$, then any $\omega$-bandlimited signal can be written as a linear combination of corresponding eigenvectors
%
\begin{equation}
\fv = \sum_{i=1}^r \av_i\uv^i = \Um_{\Vc\Rc} \av,
\label{eq:bl_sig}
\end{equation}
where $\av$ is the coefficient vector. The space of $\omega$-bandlimited signals is called a Paley-Wiener space $PW_\omega(G)$.
%

\vspace{-0.5\baselineskip}
\subsection{Sampling Theorem and Bandlimited Reconstruction}
\label{sec:samp_th}
Sampling theory deals with the problem reconstructing an $\omega$-bandlimited signal $\fv$ from its samples $\fv_\Sc$ on the nodes in $\Sc \subseteq \Vc$. There are three important questions that need to be answered in this context:
1.~Given $\Sc$, what is the maximum bandwidth $\omega$ that $\fv$ can have so that it is uniquely determined by $\fv_\Sc$? 
2.~Which is the best sampling set $\Sc_\text{opt}$ of a given size $m$?  
3.~Given that $\fv$ is uniquely determined by $\fv_\Sc$, how to find the unknown samples $\fv_\Scc$? 
We briefly review some of the results related to each of the above problems.

Let $L_2(\Scc)$ be the space of signals which are identically zero on $\Sc$ but can have non-zero samples on $\Scc$, i.e., $\gv_\Sc = \zerov\;\forall\gv \in L_2(\Scc)$. It is easy to see that for all signals in $PW_\omega(G)$ to be uniquely determined by their samples on $\Sc$, we need $PW_\omega(G) \cap L_2(\Scc) = \{\zerov\}$. This observation leads to the following theorem. 
\begin{theorem}[Sampling Theorem~\cite{Anis-ICASSP-14}] 
Any signal in $PW_\omega(G)$ can be uniquely reconstructed from its samples on a subset of nodes $\Sc$ if and only if
%
\begin{equation}
\omega < \inf_{\gv \in L_2(\Scc)} \omega(\gv),
\end{equation}
where $\omega(.)$ denotes the bandwidth of a signal. If the above condition is satisfied, then $\Sc$ is said to be a uniqueness set for $PW_\omega(G)$.
\end{theorem}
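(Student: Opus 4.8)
The plan is to reduce the statement to a purely linear-algebraic condition on the two subspaces $PW_\omega(G)$ and $L_2(\Scc)$, and then translate that condition into the claimed bandwidth inequality. First I would pin down what ``uniquely reconstructed'' means: the sampling map $\fv \mapsto \fv_\Sc$, restricted to $PW_\omega(G)$, should be injective. Since $PW_\omega(G)$ is a linear subspace, injectivity is equivalent to having trivial kernel, i.e.\ the only $\gv \in PW_\omega(G)$ with $\gv_\Sc = \zerov$ is $\gv = \zerov$. As the excerpt already notes, a signal satisfies $\gv_\Sc = \zerov$ exactly when $\gv \in L_2(\Scc)$, so unique reconstruction is equivalent to $PW_\omega(G) \cap L_2(\Scc) = \{\zerov\}$. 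This reduction absorbs the ``reconstruction'' language and lets me argue entirely about the intersection of two subspaces.

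Next I would establish the key equivalence
\[
PW_\omega(G) \cap L_2(\Scc) = \{\zerov\} \iff \omega < \inf_{\gv \in L_2(\Scc)} \omega(\gv).
\]
The crucial observation is the characterization $\gv \in PW_\omega(G) \iff \omega(\gv) \leq \omega$, which is immediate from the definition of bandwidth via the support of the GFT. For the ($\Leftarrow$) direction, if $\omega$ is strictly below the infimum then every nonzero $\gv \in L_2(\Scc)$ has $\omega(\gv) > \omega$, so no such $\gv$ can lie in $PW_\omega(G)$ and the intersection is trivial. For the ($\Rightarrow$) direction I argue by contrapositive: assuming $\omega \geq \inf_{\gv \in L_2(\Scc)} \omega(\gv)$, I produce a nonzero $\gv \in L_2(\Scc)$ with $\omega(\gv) \leq \omega$, which then lies in both subspaces and violates triviality.

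The main obstacle, and the point requiring the most care, is the boundary behaviour encoded by the \emph{strict} inequality. Because $\Lm$ has only finitely many eigenvalues $\{\lambda_1,\dots,\lambda_N\}$, the bandwidth $\omega(\gv) = \max\{\lambda_i : \tilde{\gv}_i \neq 0\}$ takes values in this finite set, and $L_2(\Scc)$ is finite-dimensional; hence the infimum over the nonzero signals of $L_2(\Scc)$ is in fact \emph{attained} at some minimizer $\gv^\star$. This attainment is exactly what makes the ($\Rightarrow$) argument go through cleanly: when $\omega \geq \inf$, the minimizer satisfies $\omega(\gv^\star) \leq \omega$, so $\gv^\star \in PW_\omega(G) \cap L_2(\Scc)$ is a genuine nonzero element, while the strict inequality is precisely the condition that forbids any member of $L_2(\Scc)$ from slipping into $PW_\omega(G)$ at the threshold frequency. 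I would flag that the infimum must be taken over nonzero $\gv$ (or use the convention $\omega(\zerov)=0$) so that the quantity is well defined, and note in passing that this supremal admissible bandwidth is what is subsequently named the cut-off frequency of the sampling set $\Sc$.
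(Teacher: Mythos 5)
Your proposal is correct and follows essentially the same route the paper takes: the paper reduces unique reconstruction to the condition $PW_\omega(G) \cap L_2(\Scc) = \{\zerov\}$ and then states the theorem (deferring details to the cited reference), and you carry out exactly that reduction and supply the remaining equivalence with the infimum condition, correctly noting that attainment of the infimum (finite dimensionality) is what makes the strict inequality necessary. No gaps.
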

To ensure unique recovery of a signal from its samples on $\Sc$, its bandwidth has to be less than $\inf_{\gv \in L_2(\Scc)} \omega(\gv)$. This is called the cut-off frequency associated with the subset $\Sc$ and is denoted by $\omega(\Sc)$. 
An estimate of the cut-off frequency is given by~\cite{Anis-ICASSP-14}
\begin{equation}
\Omega_k(\Sc) = \left(\lambda_\text{min}\left[(\Lm^k)_\Scc \right] \right)^{1/k}.
\end{equation}
It can be shown that $\Omega_k(\Sc) \leq \omega(\Sc)$ and we get closer to $\omega(\Sc)$ as $k$ increases.  

A larger cut-off frequency estimate $\Omega_k(\Sc)$ implies that a bigger space of signals can be perfectly recovered from their samples on $\Sc$. Therefore, $\Omega_k(\Sc)$ can be used as a criterion function to be maximized for choosing the optimal sampling set $\Sc_\text{opt}$ of given size $m$, i.e., 
\begin{equation}
\Sc_\text{opt} = \argmax_{|\Sc| = m} \Omega_k(\Sc).
\label{eq:set_select}
\end{equation}
The above problem is combinatorial and NP-hard. A greedy algorithm for finding an approximate solution is proposed in~\cite{Anis-ICASSP-14}.

Consider a signal $\fv \in PW_\omega(G)$ with $\omega < \omega(\Sc)$. Using the representation of a bandlimited signal in \eqref{eq:bl_sig}, we get that $\fv_\Sc = \Um_{\Sc\Rc}\av$. Since $\fv$ is uniquely sampled on $\Sc$, $\Um_{\Sc\Rc}$ must have full column rank so that the least squares solution $\av$ of the above system of equations is unique. The unknown samples can then be reconstructed by:
\begin{equation}
\fv_\Scc = \Um_{\Scc\Rc}(\Um_{\Sc\Rc}^\top\Um_{\Sc\Rc})^{-1}\Um_{\Sc\Rc}^\top \fv_\Sc.
\label{eq:bl_recon}
\end{equation}
A faster, iterative method for bandlimited reconstruction is proposed in~\cite{Narang-GlobalSIP-13}, which does not need the computation of eigenvectors.  

These sampling theory based algorithms for subset selection and signal reconstruction have been applied to graph based active semi-supervised learning and are shown to perform better than many state of the art approaches~\cite{Gadde-KDD-14}.

%
\vspace{-0.25\baselineskip}
\section{GRF Model for Graph Signals}

%
%
In order to give a probabilistic interpretation of the graph signal processing framework, we define a generative model for the signal using a pairwise Gaussian Random Field (GRF) based on the graph $G$. A random signal $\fv = (\fv_1, \ldots, \fv_N)^\top$ is assumed to be drawn from the following distribution:
\begin{align}
p(\fv) &\propto \exp \left( - \sum_{i,j} w_{ij}(\fv_i - \fv_j)^2 -  \delta \sum_i \fv_i^2 \right)  \nonumber \\
& = \exp \left(-\fv^\top(\Lm + \delta \mathbf{I}) \fv \right),
\label{eq:grf}
\end{align}
where $\mathbf{I}$ denotes an identity matrix of size $N \times N$. Let $\Km$ be the covariance matrix of the the GRF. Then, from the above equation, the inverse covariance matrix (also known as the precision matrix) can be written as:
\begin{equation}
\Km^{-1} = \Lm + \delta \mathbf{I}.
\end{equation} 
Note that $\Km$ has the same eigenvectors as $\Lm$, while the corresponding eigenvalues are $\sigma_i = \frac{1}{\lambda_i + \delta}$. Thus, $\Km$ can be diagonalized as
\begin{equation}
\Km = \sum_{i=1}^N\frac{1}{\lambda_i + \delta}\uv^i {\uv^i}^\top = \Um \Sigmam \Um^\top,
\label{eq:K_eig}
\end{equation} 
where $\Sigmam = \diag\{\sigma_1,\ldots,\sigma_N\}$. The advantage of introducing the parameter $\delta$ is that it leads to a non-singular precision matrix and thus, allows us to have a proper covariance matrix. $\sigma_1 = 1/\delta$ can be thought of as the variance of the DC component of $\fv$ since $\uv^1 = \onev$.



\section{Sampling Theory and Inference over GRF}
Consider a signal $\fv$ generated using the GRF defined in \eqref{eq:grf} with covariance matrix $\Km = (\Lm + \delta\mathbf{I})^{-1}$. As in the sampling problem, we observe the samples of $\fv$ on a subset $\Sc$ of nodes. Our goal is to estimate the unknown samples. 
It is well known that the conditional distribution of $\fv_\Scc$ given $\fv_\Sc$ equals $\Nc(\muv_{\Scc|\Sc}, \Km_{\Scc|\Sc})$, where
\begin{align}
\muv_{\Scc|\Sc} &= \Km_{\Scc\Sc} (\Km_\Sc)^{+} \fv_\Sc \text{ and} \\ 
\Km_{\Scc|\Sc} &= \Km_{\Scc} - \Km_{\Scc\Sc} (\Km_\Sc)^{+} \Km_{\Sc\Scc}
\label{eq:map_est}
\end{align}
are the MAP estimate and the predictive covariance matrix of $\fv_\Scc$ given $\fv_\Sc$, respectively~\cite{Zhu-ICML-03,Scholtz-562a}.

\vspace{-0.5\baselineskip}
\subsection{Bandlimited Reconstruction as MAP Inference}
Let $\lambda_r$ be the largest eigenvalue of $\Lm$ which is less than $\omega$. We define
$\hat{\Km}$ to be a low rank approximation of $\Km$ which only contains the spectral components corresponding to $\{\lambda_1, \ldots, \lambda_r\}$, i.e.,
\vspace{-0.25\baselineskip}
\begin{equation}
\hat{\Km} = \sum_{i = 1}^r \frac{1}{\lambda_i + \delta} \uv^i {\uv^i}^\top = \Um_{\Vc\Rc} \Sigmam_\Rc \Um_{\Vc\Rc}^\top.
\label{eq:K_low_rank}
\end{equation}
%
Consider the problem of reconstructing a random signal generated using a GRF with covariance $\hat{\Km}$, from its samples on $\Sc$. The following theorem shows that, if conditions of the sampling theorem are satisfied, then the error of bandlimited reconstruction is zero.
\vspace{-0.25\baselineskip}
\begin{theorem}
Let $\fv$ be a random graph signal generated using the GRF with covariance $\hat{\Km}$ given by \eqref{eq:K_low_rank}. Let $\hat{\fv}_\Scc$ be the bandlimited reconstruction of $\fv_\Scc$ obtained from its samples on $\Sc$, where $\Sc$ is a uniqueness set for $PW_\omega(G)$. Then, $\|\fv_\Scc-\hat{\fv}_\Scc\| = 0$.
\label{th:zero_avg_err}
\end{theorem}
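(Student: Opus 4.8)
The plan is to exploit the fact that the low-rank covariance $\hat{\Km}$ forces \emph{every} realization of $\fv$ to be bandlimited, after which exactness of the reconstruction follows immediately from the uniqueness-set hypothesis. The statement is therefore a deterministic (pathwise) zero, not merely a statement about expected error, and the proof should reflect that.

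First I would argue that $\fv \in PW_\omega(G)$ almost surely. Since $\hat{\Km} = \Um_{\Vc\Rc}\Sigmam_\Rc\Um_{\Vc\Rc}^\top$ has rank $r$ with column space equal to $\mathrm{span}\{\uv^1,\ldots,\uv^r\}$, a Gaussian vector with this degenerate covariance is supported entirely on the range of $\hat{\Km}$. Hence every realization can be written as $\fv = \Um_{\Vc\Rc}\av$ for some coefficient vector $\av$, which is exactly the representation \eqref{eq:bl_sig} of an $\omega$-bandlimited signal. Restricting this identity to the sampled and unsampled nodes gives $\fv_\Sc = \Um_{\Sc\Rc}\av$ and $\fv_\Scc = \Um_{\Scc\Rc}\av$.

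Next I would use the uniqueness hypothesis to recover $\av$ exactly. Because $\Sc$ is a uniqueness set for $PW_\omega(G)$, the submatrix $\Um_{\Sc\Rc}$ has full column rank, so $\Um_{\Sc\Rc}^\top\Um_{\Sc\Rc}$ is invertible and the least-squares solution $(\Um_{\Sc\Rc}^\top\Um_{\Sc\Rc})^{-1}\Um_{\Sc\Rc}^\top\fv_\Sc$ equals $\av$. Substituting into the bandlimited reconstruction formula \eqref{eq:bl_recon} then gives the punchline
\begin{equation}
\hat{\fv}_\Scc = \Um_{\Scc\Rc}(\Um_{\Sc\Rc}^\top\Um_{\Sc\Rc})^{-1}\Um_{\Sc\Rc}^\top\fv_\Sc = \Um_{\Scc\Rc}\av = \fv_\Scc,
\end{equation}
whence $\|\fv_\Scc-\hat{\fv}_\Scc\| = 0$.

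The only conceptually delicate step — and thus the main obstacle — is the first one: justifying that a Gaussian with a rank-deficient covariance concentrates on the range of that covariance, so that each sample path is genuinely bandlimited. This is a standard fact about degenerate Gaussians, but it is exactly where the low-rank truncation $\hat{\Km}$ (rather than the full $\Km$) is essential; with the full-rank $\Km$ the signal would leave $PW_\omega(G)$ and the error would no longer vanish. Everything after that is a direct algebraic substitution relying on the full-column-rank property guaranteed by the uniqueness set.
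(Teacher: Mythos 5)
Your proof is correct, but it takes a genuinely different route from the paper's. The paper first establishes (Lemma~\ref{th:bl_recon_low_rank}) that bandlimited reconstruction coincides with the MAP estimate $\hat{\muv}_{\Scc|\Sc}$ under the covariance $\hat{\Km}$, then computes $\mathbb{E}(\|\fv_\Scc-\hat{\fv}_\Scc\|^2) = \trace(\hat{\Km}_{\Scc|\Sc})$ and shows that the Schur complement $\hat{\Km}_{\Scc} - \hat{\Km}_{\Scc\Sc}(\hat{\Km}_\Sc)^+\hat{\Km}_{\Sc\Scc}$ vanishes using the block form \eqref{eq:blk_K} and the full column rank of $\Um_{\Sc\Rc}$; the pathwise conclusion is then extracted from the vanishing of the expected squared error. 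You instead argue directly on sample paths: a degenerate Gaussian with covariance $\hat{\Km}$ is supported on $\mathrm{range}(\hat{\Km}) = \mathrm{span}\{\uv^1,\ldots,\uv^r\}$, so $\fv = \Um_{\Vc\Rc}\av$ almost surely, and the least-squares reconstruction recovers $\av$ exactly because $\Um_{\Sc\Rc}^\top\Um_{\Sc\Rc}$ is invertible. Your argument is more elementary and makes the pathwise (almost-sure) nature of the claim transparent without any covariance computation; the paper's argument is longer in isolation but earns its keep because the intermediate lemma --- the identification of bandlimited reconstruction with MAP inference on the low-rank GRF --- is itself a central contribution of the paper and is reused in the subsequent discussion of set selection criteria. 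The one step you flag as delicate (concentration of a degenerate Gaussian on the range of its covariance) is indeed standard and correctly invoked, so there is no gap.
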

\vspace{-0.25\baselineskip}
Before proving the above theorem, we show, in the following lemma, that bandlimited reconstruction is equivalent to MAP inference on the GRF with covariance $\hat{\Km}$. 
\vspace{-0.0\baselineskip}
\begin{lemma}
\label{th:bl_recon_low_rank}
Let $\Sc \subseteq \Vc$ be a uniqueness set for $PW_\omega(G)$. 
Then the MAP estimate of $\fv_{\Scc}$ given $\fv_{\Sc}$ in a GRF with covariance matrix $\hat{\Km}$ is equal to the bandlimited reconstruction given by~\eqref{eq:bl_recon}. 
\end{lemma}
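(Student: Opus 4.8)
The plan is to observe that the MAP estimate of $\fv_\Scc$ given $\fv_\Sc$ in the GRF with covariance $\hat{\Km}$ is the linear map $\hat{\Km}_{\Scc\Sc}(\hat{\Km}_\Sc)^{+}\fv_\Sc$ obtained by substituting $\hat{\Km}$ for $\Km$ in \eqref{eq:map_est}, while the bandlimited reconstruction \eqref{eq:bl_recon} is the linear map $\Um_{\Scc\Rc}(\Um_{\Sc\Rc}^\top\Um_{\Sc\Rc})^{-1}\Um_{\Sc\Rc}^\top\fv_\Sc$. Since both are applied to the same $\fv_\Sc$, it suffices to prove the matrix identity $\hat{\Km}_{\Scc\Sc}(\hat{\Km}_\Sc)^{+} = \Um_{\Scc\Rc}(\Um_{\Sc\Rc}^\top\Um_{\Sc\Rc})^{-1}\Um_{\Sc\Rc}^\top$, which then gives the claim for every $\fv_\Sc$.

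First I would read off the relevant blocks of $\hat{\Km}$ directly from the factored form \eqref{eq:K_low_rank}. Restricting rows and columns to the index sets yields $\hat{\Km}_\Sc = \Um_{\Sc\Rc}\Sigmam_\Rc\Um_{\Sc\Rc}^\top$ and $\hat{\Km}_{\Scc\Sc} = \Um_{\Scc\Rc}\Sigmam_\Rc\Um_{\Sc\Rc}^\top$, where $\Sigmam_\Rc = \diag\{\sigma_1,\ldots,\sigma_r\}$ is positive definite because each $\sigma_i = 1/(\lambda_i+\delta) > 0$. Because $\Sc$ is a uniqueness set for $PW_\omega(G)$, the $|\Sc|\times r$ matrix $\Um_{\Sc\Rc}$ has full column rank $r$ (this is precisely the fact invoked to justify \eqref{eq:bl_recon}), so $\Um_{\Sc\Rc}^\top\Um_{\Sc\Rc}$ is invertible and $\hat{\Km}_\Sc$ has rank $r < |\Sc|$.

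The crux is to evaluate the pseudoinverse of the rank-deficient symmetric matrix $\hat{\Km}_\Sc$. The key identity I would establish is that, for full-column-rank $\Um_{\Sc\Rc}$ and positive definite diagonal $\Sigmam_\Rc$,
\[
(\Um_{\Sc\Rc}\Sigmam_\Rc\Um_{\Sc\Rc}^\top)^{+} = \Um_{\Sc\Rc}(\Um_{\Sc\Rc}^\top\Um_{\Sc\Rc})^{-1}\Sigmam_\Rc^{-1}(\Um_{\Sc\Rc}^\top\Um_{\Sc\Rc})^{-1}\Um_{\Sc\Rc}^\top .
\]
I would justify this either by checking the four Moore--Penrose conditions directly, in which case the orthogonal projector $\Um_{\Sc\Rc}(\Um_{\Sc\Rc}^\top\Um_{\Sc\Rc})^{-1}\Um_{\Sc\Rc}^\top$ onto the column space of $\Um_{\Sc\Rc}$ emerges cleanly from each product, or more compactly by factoring $\hat{\Km}_\Sc = \Fm\Fm^\top$ with $\Fm = \Um_{\Sc\Rc}\Sigmam_\Rc^{1/2}$ of full column rank and invoking the standard formula $(\Fm\Fm^\top)^{+} = \Fm(\Fm^\top\Fm)^{-2}\Fm^\top$.

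Given the pseudoinverse, the remainder is routine cancellation. Substituting $\hat{\Km}_{\Scc\Sc}$ and $(\hat{\Km}_\Sc)^{+}$ and writing $\Gm = \Um_{\Sc\Rc}^\top\Um_{\Sc\Rc}$, the interior collapses as $\Sigmam_\Rc\,\Gm\,\Gm^{-1}\Sigmam_\Rc^{-1}\Gm^{-1} = \Gm^{-1}$, leaving $\hat{\Km}_{\Scc\Sc}(\hat{\Km}_\Sc)^{+} = \Um_{\Scc\Rc}(\Um_{\Sc\Rc}^\top\Um_{\Sc\Rc})^{-1}\Um_{\Sc\Rc}^\top$, exactly the map in \eqref{eq:bl_recon}. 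I expect the only genuine obstacle to be the pseudoinverse step: one must respect the rank deficiency of $\hat{\Km}_\Sc$ so that $(\cdot)^{+}$ cannot be replaced by an ordinary inverse, and it is precisely the positive-definiteness of $\Sigmam_\Rc$ together with the full column rank of $\Um_{\Sc\Rc}$ (guaranteed by the uniqueness-set hypothesis) that make the telescoping products well defined.
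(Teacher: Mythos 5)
Your proposal is correct and follows essentially the same route as the paper: both read off the blocks $\hat{\Km}_{\Scc\Sc}=\Um_{\Scc\Rc}\Sigmam_\Rc\Um_{\Sc\Rc}^\top$ and $\hat{\Km}_\Sc=\Um_{\Sc\Rc}\Sigmam_\Rc\Um_{\Sc\Rc}^\top$, exploit the full column rank of $\Um_{\Sc\Rc}$ (from the uniqueness-set hypothesis) to evaluate $(\hat{\Km}_\Sc)^+$ as $\Um_{\Sc\Rc}(\Um_{\Sc\Rc}^\top\Um_{\Sc\Rc})^{-1}\Sigmam_\Rc^{-1}(\Um_{\Sc\Rc}^\top\Um_{\Sc\Rc})^{-1}\Um_{\Sc\Rc}^\top$, and cancel to recover \eqref{eq:bl_recon}. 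The only cosmetic difference is that the paper invokes the reverse-order law $(\Um_{\Sc\Rc}\Sigmam_\Rc\Um_{\Sc\Rc}^\top)^+=(\Um_{\Sc\Rc}^\top)^+\Sigmam_\Rc^+\Um_{\Sc\Rc}^+$ directly, whereas you justify the same identity more explicitly via the Moore--Penrose conditions or the factorization $\Fm\Fm^\top$ with $\Fm=\Um_{\Sc\Rc}\Sigmam_\Rc^{1/2}$.
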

\vspace{-0.95\baselineskip}
\begin{proof}
Under a permutation which groups together nodes in $\Scc$ and $\Sc$, we can write $\hat{\Km}$ as the following block matrix
\begin{align}
\begin{bmatrix}
\hat{\Km}_\Scc & \hat{\Km}_{\Scc\Sc} \\
\hat{\Km}_{\Sc\Scc} & \hat{\Km}_\Sc
\end{bmatrix}
= \begin{bmatrix}
\Um_{\Scc\Rc} \Sigmam_\Rc \Um_{\Scc\Rc}^\top & \Um_{\Scc\Rc} \Sigmam_\Rc \Um_{\Sc\Rc}^\top \\
\Um_{\Sc\Rc} \Sigmam_\Rc \Um_{\Scc\Rc}^\top & \Um_{\Sc\Rc} \Sigmam_\Rc \Um_{\Sc\Rc}^\top
\end{bmatrix}
\label{eq:blk_K}
\end{align}
Therefore, we can write the MAP estimate obtained with covariance $\hat{\Km}$ as, 
\begin{equation}
\hat{\muv}_{\Scc|\Sc} = \Um_{\Scc\Rc} \Sigmam_\Rc \Um_{\Sc\Rc}^\top (\Um_{\Sc\Rc} \Sigmam_\Rc \Um_{\Sc\Rc}^\top)^+ \fv_\Sc.
\label{eq:map_bl}
\end{equation}
Because $\omega < \omega(\Sc)$, we have that $\Um_{\Sc\Rc}$ has full column rank and equivalently, $\Um_{\Sc\Rc}^\top$ has full row rank. Therefore, we can write $(\Um_{\Sc\Rc} \Sigmam_\Rc \Um_{\Sc\Rc}^\top)^+ = (\Um_{\Sc\Rc}^\top)^+ \Sigmam_\Rc^+ \Um_{\Sc\Rc}^+$ and $\Um_{\Sc\Rc}^+ = (\Um_{\Sc\Rc}^\top\Um_{\Sc\Rc})^{-1} \Um_{\Sc\Rc}^\top$. Simplifying \eqref{eq:map_bl} using these equalities leads to
\begin{align*}
\hat{\fv}_\Scc 
&= \Um_{\Scc\Rc} (\Um_{\Sc\Rc}^\top\Um_{\Sc\Rc})^{-1} \Um_{\Sc\Rc}^\top \fv_\Sc,
\end{align*}
which is equal to the least squares solution given in \eqref{eq:bl_recon}.
\end{proof}
\begin{proof}[Proof of Theorem~\ref{th:zero_avg_err}]
From Lemma~\ref{th:bl_recon_low_rank}, $\hat{\fv}_\Scc = \hat{\muv}_{\Scc|\Sc}$. Therefore, $\mathbb{E}(\|\fv_\Scc-\hat{\fv}_\Scc\|^2) = \trace(\mathbb{E}(\fv_\Scc-\hat{\muv}_{\Scc|\Sc})(\fv_\Scc-\hat{\muv}_{\Scc|\Sc})^\top) = \trace(\hat{\Km}_{\Scc|\Sc})$. Now, $\hat{\Km}_{\Scc|\Sc} = \hat{\Km}_\Scc - \hat{\Km}_{\Scc\Sc}(\hat{\Km}_\Sc)^+\hat{\Km}_{\Sc\Scc}$. Using the block form of $\hat{\Km}$ in \eqref{eq:blk_K}, and the fact that $\Um_{\Sc\Rc}$ has full column rank, it is easy to show that $\hat{\Km}_{\Scc|\Sc} = \zerov$, which implies $\mathbb{E}(\|\fv_\Scc-\hat{\fv}_\Scc\|^2) = 0$. But since, $\|\fv_\Scc-\hat{\fv}_\Scc\| \geq 0,$ we get $\|\fv_\Scc-\hat{\fv}_\Scc\| = 0$. 
\end{proof}
%


\vspace{-0.75\baselineskip}
\subsection{Cut-off Frequency and Estimation Error}
If the true covariance matrix is only approximately low rank, then MAP inference with $\hat{\Km}$ gives a non-zero reconstruction error. The best sampling set in this case is the one which minimizes the predictive covariance.
According to the sampling theory of graph signals, the optimal sampling set of given size is the one which has the largest associated cut-off frequency. We show that finding a sampling set $\Sc$ which maximizes a crude estimate of the cut-off frequency $\Omega_1(\Sc)$ is equivalent to minimizing the maximum eigenvalue of the predictive covariance of ${\fv}_\Scc$ given $\fv_\Sc$. 
%
\begin{proposition}
Let $\Sc_\text{opt} = \argmax_{|\Sc| = m} \Omega_1(\Sc)$. Let $\Km =$ \hbox{$(\Lm + \delta \mathbf{I})^{-1}$.} Then, $\Sc_\text{opt} = \argmin_{|\Sc| = m} \lambda_\text{max}[\Km_{\Scc|\Sc}]$. 
\end{proposition}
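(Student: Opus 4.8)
The plan is to derive a closed form for the predictive covariance $\Km_{\Scc|\Sc}$ in terms of the Laplacian submatrix $\Lm_\Scc$, and then read off its largest eigenvalue as a strictly decreasing function of $\Omega_1(\Sc)$; the $\argmax$/$\argmin$ equivalence then follows immediately.

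First I would note that $\Km = (\Lm + \delta\mathbf{I})^{-1}$ is positive definite, so every principal submatrix $\Km_\Sc$ is invertible and the pseudo-inverse appearing in \eqref{eq:map_est} coincides with the ordinary inverse. The crucial step is the Schur-complement identity for Gaussian conditioning: the predictive covariance equals the inverse of the corresponding block of the precision matrix. Writing $\Pm = \Km^{-1} = \Lm + \delta\mathbf{I}$ and partitioning it according to $\Scc$ and $\Sc$, the block-inverse formula gives
\[
\Km_{\Scc|\Sc} = \Km_\Scc - \Km_{\Scc\Sc}(\Km_\Sc)^{-1}\Km_{\Sc\Scc} = (\Pm_\Scc)^{-1}.
\]
Since the precision block is simply $\Pm_\Scc = (\Lm + \delta\mathbf{I})_\Scc = \Lm_\Scc + \delta\mathbf{I}$, this yields the clean identity $\Km_{\Scc|\Sc} = (\Lm_\Scc + \delta\mathbf{I})^{-1}$.

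The remainder is a short eigenvalue calculation. Because $\Lm_\Scc + \delta\mathbf{I}$ is symmetric positive definite, inverting it inverts and reorders the spectrum, so that
\[
\lambda_\text{max}[\Km_{\Scc|\Sc}] = \frac{1}{\lambda_\text{min}[\Lm_\Scc + \delta\mathbf{I}]} = \frac{1}{\lambda_\text{min}[\Lm_\Scc] + \delta} = \frac{1}{\Omega_1(\Sc) + \delta},
\]
using the definition $\Omega_1(\Sc) = \lambda_\text{min}[\Lm_\Scc]$. The map $x \mapsto 1/(x+\delta)$ is strictly decreasing on $x > -\delta$, hence among all $\Sc$ with $|\Sc| = m$ the set maximizing $\Omega_1(\Sc)$ is exactly the set minimizing $\lambda_\text{max}[\Km_{\Scc|\Sc}]$, which is the claim.

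The one step that requires care is the first: justifying that conditioning in the GRF produces a predictive covariance equal to the inverse of the precision block $(\Lm_\Scc + \delta\mathbf{I})^{-1}$, rather than an expression still involving the full covariance $\Km$. Once the precision-matrix viewpoint is in place, everything reduces to monotonicity of a scalar function. Because the resulting correspondence between $\Omega_1(\Sc)$ and $\lambda_\text{max}[\Km_{\Scc|\Sc}]$ is a strictly monotone bijection, the two optimal collections of sets coincide exactly even when the optimizer is non-unique, so no tie-breaking subtlety arises.
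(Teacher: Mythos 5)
Your proof is correct and follows essentially the same route as the paper: both rest on the block-inversion/Schur-complement identity that the predictive covariance $\Km_{\Scc|\Sc}$ equals the inverse of the precision block $(\Lm + \delta\mathbf{I})_\Scc$, from which $\lambda_\text{max}[\Km_{\Scc|\Sc}] = 1/(\Omega_1(\Sc)+\delta)$ and the conclusion follows by monotonicity. Your explicit remarks that positive definiteness lets the pseudo-inverse be replaced by the inverse, and that strict monotonicity preserves the full set of optimizers, are small but welcome additions to the paper's terser argument.
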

\begin{proof}
Consider a block matrix representation of $\Km$ similar to \eqref{eq:blk_K}. Using the block matrix inversion formula, we can write $\Km^{-1}$ as
\begin{equation*}
\Km^{-1} = \begin{bmatrix}
\Sm_{\Km_\Sc}^{-1} & -(\Km_{\Scc})^{-1} \Km_{\Scc\Sc} \Sm_{\Km_\Scc}^{-1}\\
-(\Km_{\Sc})^{-1} \Km_{\Scc\Sc}^\top \Sm_{\Km_\Sc}^{-1} & \Sm_{\Km_\Scc}^{-1}
\end{bmatrix}, 
\end{equation*}
\vspace{-\baselineskip}
\begin{align}
\text{where} \quad
&\Sm_{\Km_\Sc} = \Km_\Scc - \Km_{\Scc\Sc} (\Km_\Sc)^{-1} \Km_{\Scc\Sc}^\top, \nonumber  \\
&\Sm_{\Km_\Scc} = \Km_\Sc - \Km_{\Scc\Sc}^\top (\Km_\Scc)^{-1} \Km_{\Scc\Sc}
\end{align}
are the Schur complements of $\Km_{\Sc}$ and $\Km_{\Scc}$ respectively. $\Lm_\Scc = (\Km^{-1})_\Scc - \delta \mathbf{I}_\Scc = \Sm_{\Km_\Sc}^{-1} - \delta \mathbf{I}_\Scc$. Note that $\Sm_{\Km_\Sc} = \Km_{\Scc|\Sc}$. Thus, the estimated cut-off frequency corresponding to the subset $\Sc$ of nodes can be written in terms of the conditional covariance matrix
\begin{equation}
\Omega_1(\Sc) = \lambda_\text{min}[\Lm_\Scc] = \frac{1}{\lambda_{\text{max}}[\Km_{\Scc|\Sc}]} - \delta.
\end{equation}
The result readily follows from this. 
\end{proof}
\vspace{-0.35\baselineskip}
A sampling set with the largest estimated cut-off frequency $\Omega_1(\Sc)$ also minimizes the worst case prediction error of the MAP estimate on a GRF with $\Km = (\Lm+\delta\mathbf{I})^{-1}$. However, as shown in Lemma~\ref{th:bl_recon_low_rank}, bandlimited signal reconstruction is equivalent to MAP estimation with a low rank approximation of $\Km$. Intuitively, a better estimate of the predictive covariance, in this model of signal reconstruction, can be obtained with by $((\Km^k)_{\Scc|\Sc})^{1/k}$ with larger values of $k$ as it gives more weight to the principal components with larger variance. This justifies the use of $\Omega_k(\Sc)$ with $k > 1$ as a criterion for active learning.  

%

\vspace{-0.5\baselineskip}
\subsection{Justification for the Sampling Theoretic Approach to Active Semi-supervised Classification}
\label{sec:justification}
MAP estimation is optimal for reconstructing signals generated using a GRF with a full rank covariance matrix, because it minimizes the mean squared error of estimation.  Moreover, since the estimation error equals $\trace(\Km_{\Scc|\Sc})$, an optimal sampling set of size $m$ is given by $\argmin_{|\Sc|=m} \trace(\Km_{\Scc|\Sc})$. Indeed, this is the so-called \emph{$V$-optimality criterion} for active learning proposed in \cite{Ji-AIST-12}.

However, in a classification problem, data points in the same class are highly correlated whereas data points in different classes have very small correlation. Since the number of classes is typically very small compared to the number of data points, we expect the (unknown) ``true" covariance matrix to be very well-approximated by a low rank matrix~\cite{Kuang-SDM-12}. Thus, bandlimited interpolation is a better model for signal reconstruction in this context, since it is equivalent to MAP estimation with a low rank covariance matrix. Maximizing the cut-off frequency is a natural set selection criterion for this learning model. 

\vspace{-0.75\baselineskip}
\section{Related Work}
\label{sec:related_work}
\vspace{-0.5\baselineskip}
Different criteria have been proposed for batch mode active learning on Gaussian random fields. The approach presented in \cite{Krause-JMLR-08} selects the points to label such the mutual information between the labelled and unlabelled data points is maximized. Our sampling theoretic approach \eqref{eq:set_select} is more similar to the methods proposed in \cite{Ji-AIST-12,Ma-NIPS-13}. These methods use MAP estimation on GRF \cite{Zhu-ICML-03} as their model for label prediction. 
As stated before, \cite{Ji-AIST-12} chooses the sampling set $\Sc$ by minimizing $\trace(\Km_{\Scc|\Sc})$. 
The method in \cite{Ma-NIPS-13}, on the other hand, tries to minimize $\sum_{ij}(\Km_{\Scc|\Sc})_{ij}$ (also known as \emph{$\Sigma$-optimality criterion}). This is equivalent to minimizing the risk of the surveying problem~\cite{Garnett-ICML-12}  (which is the problem of determining the proportion of nodes belonging to one class). All the above methods are closely related to the optimal design of experiments~\cite{Boyd-CVX}. Experiment design deals with the problem of estimating a vector from a set of linear measurements. The goal is to choose the optimal set of $m$ measurements so that the estimation error is minimized. Different error measures lead to different optimality criteria. For example, minimizing the trace of estimation covariance leads to $A$-optimal design whereas minimizing its determinant gives the $D$-optimal design. The sampling theoretic approach is closer to the so-called $E$-optimal design which minimizes the worst case prediction error given by the maximum eigenvalue of the predictive covariance matrix.



\vspace{-0.5\baselineskip}
\section{Experiments}
\vspace{-0.5\baselineskip}
To demonstrate the effectiveness of the framework of sampling theory, we first apply it to the problem of graph based active semi-supervised classification. 
%
In our experiment, we use a subset of the USPS handwritten digit dataset containing $100$ $16\times 16$ images each of digits 0 to 9. We construct a weighted $K$-NN graph of 1000 nodes with $K=10$ and the similarities given by $w_{ij} = \exp\left(-\frac{\|\xv^i - \xv^j\|^2}{\sigma^2}\right)$. The problem is to choose the nodes to be labelled and then predict the unknown labels from the queried labels. 
We consider different combinations of active learning criteria and learning models. As expected from the discussion in Section~\ref{sec:justification}, selecting the sampling set by maximizing the cutoff frequency and then performing bandlimited reconstruction outperforms $\Sigma$ and $V$-optimality criteria used in conjunction with MAP estimation (see Figure~\ref{fig:result_1}(a)). 
Even if the learning model is fixed to bandlimited interpolation, the sampling theoretic approach gives better results as seen in Figure~\ref{fig:result_1}(b)). This is because maximizing the cutoff frequency is a more suitable set selection criterion under this model.
%
\begin{figure}[t]
\centering
        \begin{subfigure}[b]{0.385\textwidth}
                \includegraphics[width=1\textwidth]{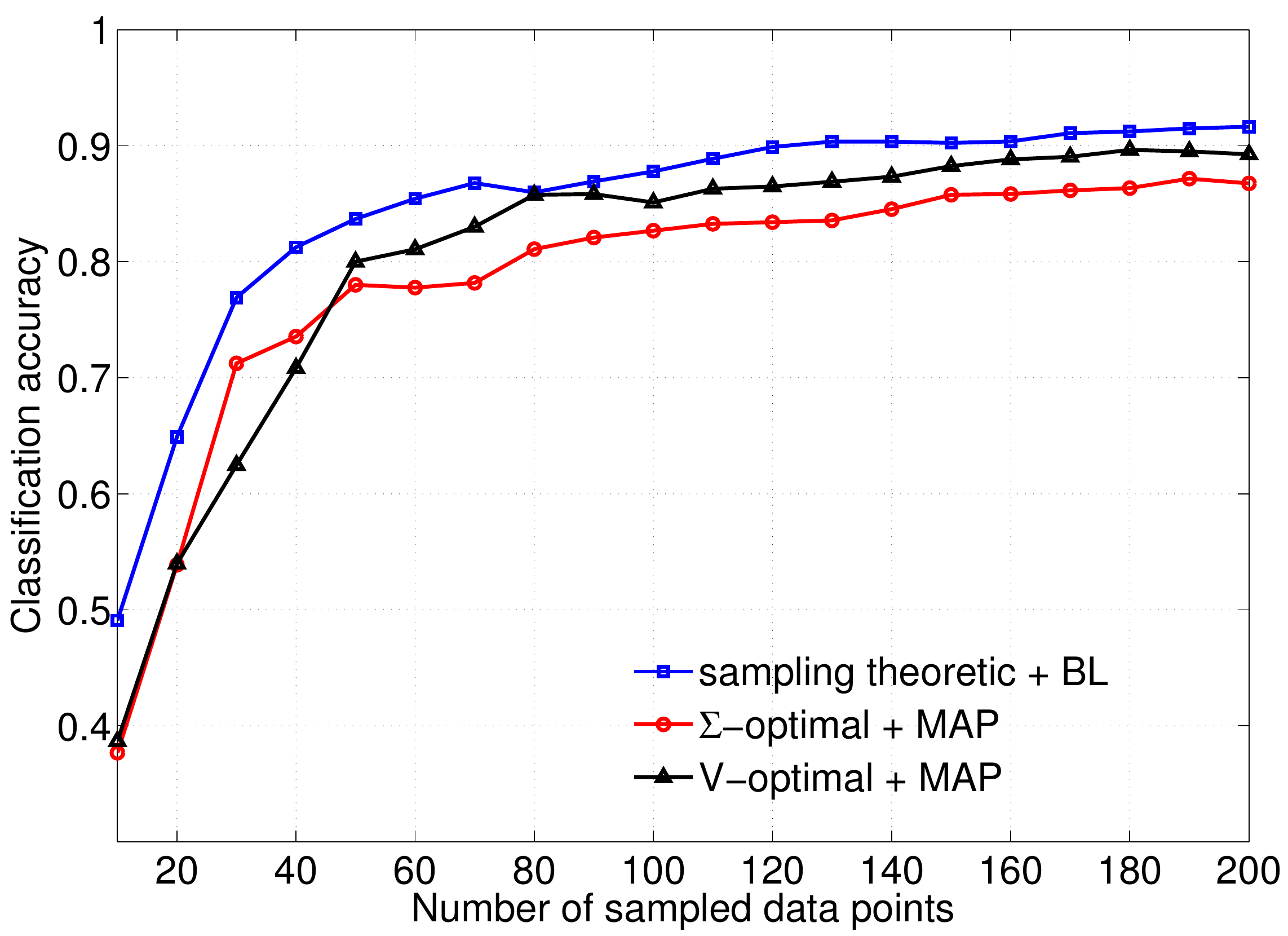}
                \caption{}
         \end{subfigure}
         \qquad
        \begin{subfigure}[b]{0.385\textwidth}
                \includegraphics[width=1\textwidth]{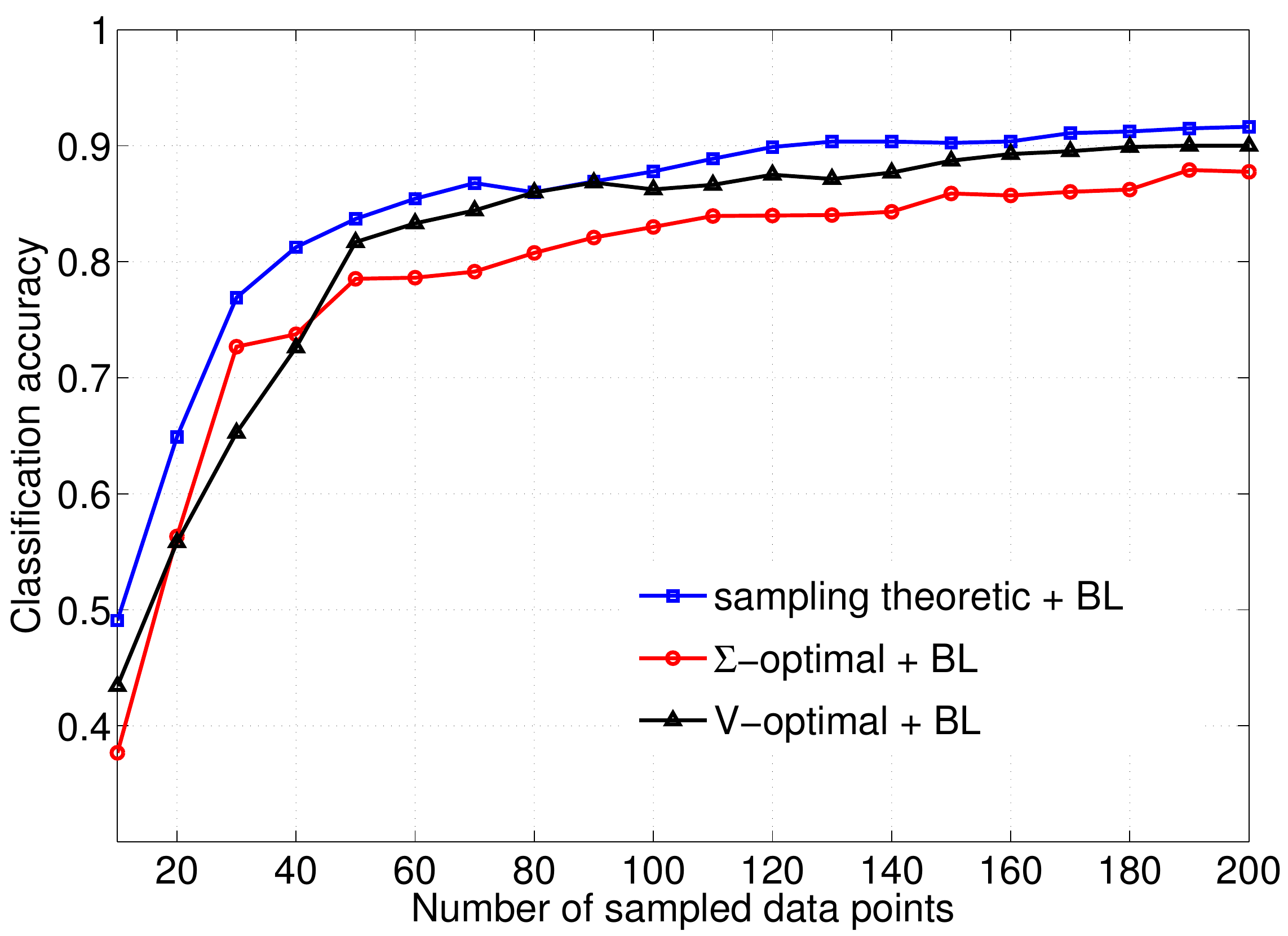}
                \caption{}
        \end{subfigure}
\caption{Figure shows the performance of different active learning criteria in conjunction with two learning models, namely, (a)~MAP~\cite{Zhu-ICML-03} and (b)~bandlimited reconstruction (BL)}
\vspace{-0.75\baselineskip}
\label{fig:result_1}
\end{figure}

On the other hand, if we consider the problem of regression of  a random real valued graph signal generated using a covariance matrix that is not low rank, 
a $V$-optimal set is expected to give a better SNR of reconstruction. This is demonstrated in Figure~\ref{fig:result_2} where we reconstruct a random real valued signal generated with the covariance matrix obtained using the graph from the previous example. 
%
\begin{figure}[t!]
  \centering
    \includegraphics[width=0.385\textwidth]{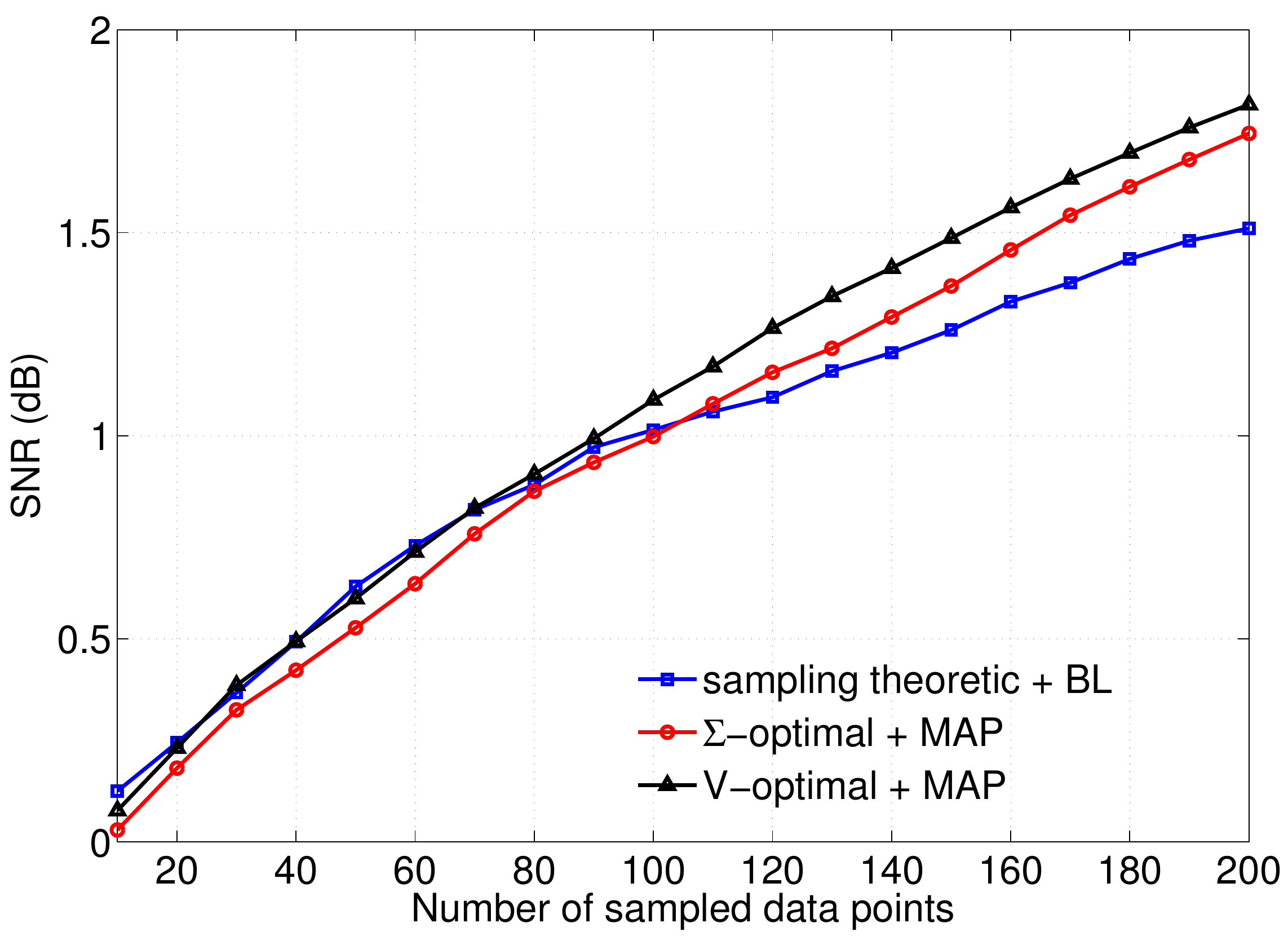}
    \caption{Performance in the case of reconstruction of a random real valued signal (averaged over 100 trials)}
    \vspace{-1.25\baselineskip}
    \label{fig:result_2}
\end{figure}

\vspace{-0.25\baselineskip}
\section{Conclusion and Future Work}
\vspace{-0.25\baselineskip}
In this paper, we gave a probabilistic interpretation for the sampling theory of graph signals. We showed that if the data is generated using a Gaussian random field whose precision matrix equals the graph Laplacian, then bandlimited reconstruction is equivalent to the MAP inference on an approximation of this GRF which has a low rank covariance matrix. Moreover, an optimal sampling set obtained via sampling theory minimizes the worst case predictive covariance of MAP estimation on the GRF. 

A probabilistic interpretation allows us to view graph signal sampling theory as a model based method. It would be interesting to consider it as part of a larger probabilistic model which refines the covariance matrix as more data is observed. This interpretation also suggests a generalization of the sampling theory to non-Gaussian models which might be more realistic for some applications. 
 
\bibliographystyle{IEEEbib}
\bibliography{refs}

\end{document}